\newcommand{\tabincell}[2]{\begin{tabular}{@{}#1@{}}#2\end{tabular}}
\newcommand{\MyMapTemplatePrefix}[4]{\expandafter#1\csname#3#4\endcsname{#2{#4}}}
\newcommand{\MyMapTemplatePrefixNew}[5]{\expandafter#1\csname#4#5\endcsname{#2{#3{#5}}}}
\newtheorem{prop}{Proposition}
\begin{document}
%
\title{Spatial-Temporal Tensor Graph Convolutional Network for Traffic Prediction}
%
%
%

\author{Xuran~Xu*,
        Tong~Zhang*,
        Chunyan~Xu,
        Zhen~Cui,
        and~Jian~Yang
        \thanks{Xuran Xu, Tong Zhang, Zhen Cui, Chunyan Xu and Jian Yang are with the School of Computer Science and Engineering, Nanjing University of Science and Technology, China.  \protect\\
Email address: (xuxuran,  tong.zhang, zhen.cui, cyx, csjyang) @njust.edu.cn.}
       \thanks{*Xuran Xu and Tong Zhang have equal contributions.}
       \thanks{Zhen Cui is the corresponding author.}
}

%
%

\markboth{Journal of \LaTeX\ Class Files,~Vol.~14, No.~8, August~2015}%
{Shell \MakeLowercase{\textit{et al.}}: Bare Demo of IEEEtran.cls for IEEE Journals}
%

\maketitle





\begin{abstract}
Accurate traffic prediction is crucial to the guidance and management of urban traffics. However, most of the existing traffic prediction models do not consider the computational burden and memory space when they capture spatial-temporal dependence among traffic data. In this work, we propose a factorized Spatial-Temporal Tensor Graph Convolutional Network to deal with traffic speed prediction. Traffic networks are modeled and unified into a graph that integrates spatial and temporal information simultaneously. We further extend graph convolution into tensor space and propose a tensor graph convolution network to extract more discriminating features from spatial-temporal graph data. To reduce the computational burden, we take Tucker tensor decomposition and derive factorized a tensor convolution, which performs separate filtering in small-scale space, time, and feature modes. Besides, we can benefit from noise suppression of traffic data when discarding those trivial components in the process of tensor decomposition. Extensive experiments on two real-world traffic speed datasets demonstrate our method is more effective than those traditional traffic prediction methods, and meantime achieves state-of-the-art performance.
\end{abstract}

\begin{IEEEkeywords}
Traffic speed prediction, tensor decomposition, spatial-temporal graph convolutional network, higher-order principal components analysis.
\end{IEEEkeywords}

%
\IEEEpeerreviewmaketitle

\section{Introduction}
%
%
%
%
\IEEEPARstart{A}{ccurate} traffic prediction plays an important role in promoting the development of the Intelligent Transportation System (ITS). The content of traffic prediction includes the forecasting of traffic flow, traffic speed, traffic demand, traffic space occupancy, etc. Among them, traffic speed/flow is one of the most crucial to describe the traffic situation, and traffic speed/flow prediction targets at predicting future traffic speed/flow at those specified traffic locations by using historical traffic speed/flow. With the future traffic speed/flow provided, the prediction can be of great significance to traffic signal control
systems to facilitate urban traffic planning, traffic management, and traffic control.

Since traffic prediction has a wide range of applications, numerous methods have been proposed to tackle this issue. The early works mainly focused on statistical models, e.g., Historical Average~(HA), Auto-Regressive Integrated Moving Average~(ARIMA)~\cite{ahmed1979analysis}, Vector Auto-Regressive~\cite{zivot2006vector}, Hidden Markov Model~\cite{qi2014hidden}, Gaussian Process~\cite{xie2010gaussian}, etc. However, as traffic data do not always satisfy the assumption of linearity and stationarity~\cite{ye2020build}, these methods fail to obtain satisfying performance in practice. Besides these statistical models, there are plenty of machine learning methods applied to traffic prediction, such as Support Vector Machine~\cite{fu2016vehicle}, K-Nearest Neighbors~\cite{may2008vector}, Random Forest Regression~\cite{johansson2014regression}, but the ability of their feature learning is limited due to the shallow architecture. 
Recently, deep learning-based techniques are flourishing in the task of traffic prediction. For example, convolution neural network (CNN) is used to extract correlations of the spatial domain and recurrent neural network (RNN) or its variants is leveraged to learn time-series patterns in~\cite{wu2016short}. The downside CNN fails to capture the topology of traffic networks. To solve this problem, the layout of spatial locations is viewed as a structured graph, then a graph convolution network~(GCN) is employed to extract structured spatial information in~\cite{zhao2019t}. Afterwards, the attention mechanism is introduced to model more discriminating spatial-temporal dependence in~\cite{zhu2020a3t}.

Although notable success has been achieved by these aforementioned methods, there are still several challenging issues to be tackled, which include two main points. First, the complex spatial-temporal correlations inside traffic data are still not sophisticatedly modeled. Correlations may exist among spatial locations which form an irregular topology with the traffic/flow evolving. For example, congestion in one position tends to cause congestion in surrounding places. Some previous works, e.g. the spatial-temporal graph neural network method~\cite{zhao2019t}, have attempted to capture the spatial-temporal dependence by combining GCN (modeling spatial information) and Gated Recurrent Unit (GRU) (modeling temporal information). However, this hierarchical architecture may not be powerful enough to well jointly/simultaneously model the spatial-temporal dependence as it divides the spatial-temporal learning process into two separate phases. Second, redundant components/noise may exist in the collected traffic data, which may degrade the speed/flow prediction performance. However, few existing algorithms have considered alleviating the influence caused by the noise/redundancy. Third, the prediction for large traffic networks confronts with the enormous computational burden.

Considering the challenges and difficulties above, in this work, we propose a Spatial-Temporal Tensor Graph Convolutional Network (ST-TGCN) framework to deal with the traffic speed prediction task. To facilitate the simultaneous modeling of both spatial and temporal dependence, a tensor graph with integrated spatial-temporal architecture is first constructed. Then, a tensor graph convolution is proposed to infer on this tensor graph to extract more discriminative features. Based on this tensor graph learning process, the crucial spatial-temporal dependence could be well modeled, which may promote traffic speed prediction performance. Further considering the redundant components/noise in the traffic data as well as the high memory overhead and computational burden of the graph inference on tensor, we accordingly derive a factorized tensor graph convolution operation to well approximate the tensor graph convolution. Specifically, the constructed tensor graph is first factorized into three separate modes (w.r.t space, time, feature) through tensor decomposition, then the factorized tensor graph convolution is conducted through graph convolution correspondingly on each separate mode. The factorized convolution can not only benefit the ST-TGCN in noise suppression through the tensor decomposition which discards redundant components but also reduce the memory requirements and computational cost. We verify the proposed ST-TGCN framework on two public traffic speed prediction datasets, and the experimental results verify its effectiveness.

In summary, our contributions are three folds:
\begin{enumerate}
\item[(1)] We propose a novel spatial-temporal tensor graph convolution framework to simultaneously model the spatial and temporal dependence which are crucial for traffic speed prediction. To the best of our knowledge, this is the first work to construct tensor graphs on traffic data and conduct tensor graph convolution to learn high level feature representation for the traffic speed prediction task.
\item[(2)] We propose a factorized tensor graph convolution to effectively and efficiently infer on a tensor graph. The factorized convolution can not only suppress the noise by discarding redundant components, but also reduce the memory requirement and computational cost.
\item[(3)] We evaluate the proposed ST-TGCN framework on two public traffic speed prediction datasets and report the state-of-the-art performances.
\end{enumerate}

\section{Related Work}
In this section, we first introduce some algorithms about graph convolutional networks, then review the previous methods targeting traffic prediction.

\subsection{Graph Convolutional Networks}

Graph Convolutional Network (GCN) is a natural extension of Convolutional Neural Network (CNN) on graph-structured data and has shown promising performance in various applications. Previous works adopted GCN to cope with static graph data, which can be categorized into spectral-based and spatial-based ones. The former~(e.g.,~\cite{defferrard2016convolutional,bruna2013spectral,kipf2016semi}) usually establish tools in  Graph Signals Processing~(GSP)~\cite{shuman2015multiscale}, and the latter~(e.g.,~\cite{niepert2016learning}) tends to form neighbor nodes' features into regularization.

Graph-structured data is ubiquitous in the real world, such as recommender systems, social networks, computational biology, and traffic systems, where some of them possess the spatial-temporal architecture. To deal with this specific part of data, recent works propose to utilize various models to model the spatial-temporal correlations. As a representative, Nguyen et al. \cite{nguyen2018continuous} proposed a continuous-time dynamic network that is capable of learning time-respecting embeddings from continuous-time dynamic networks.
To better capture temporal dependency between graph data, some works integrate GCN with RNN.~\citet{seo2018structured} proposed graph convolutional recurrent neural networks that utilized GCN to identify spatial structures and RNN to find dynamic patterns.~\citet{pareja2020evolvegcn} utilized RNN to evolve GCN parameters to capture the dynamism of the input graph sequence. Recently, the attention mechanism has been embedded into GCN to further discover the crucial temporal patterns, e.g., \cite{zhu2020a3t} and \cite{guo2019attention} adopt attention mechanism to assign different weights to historical information. However, it is still difficult to model long-term dependency among the high dimensional data with spatial-temporal graph convolutional network.
\subsection{Traffic Prediction}
Early approaches to cope with the traffic prediction problem are generally based on data statistic techniques, among which the Historical Average (HA) and the Auto-Regressive Integrated Moving Average (ARIMA) \cite{ahmed1979analysis} are two representative models. The HA model computes an average of historical traffic speed as a prediction value. Hence, it fails to capture spatial correlation and complex temporal correlations. Although ARIMA linearly combines historical traffic data, it is merely applicable to stationary data. Then traditional machine learning algorithms are introduced to dealing with the traffic prediction problem, e.g., Support Vector Regression~(SVR)~\cite{drucker1997support,smola2004tutorial}  predict traffic speed regressively.~\citet{qi2014hidden} proposed a Hidden Markov Model to model traffic behavior as a stochastic process then predict traffic speed with state transition probabilities. For more details,~\citet{qi2010probabilistic} comprehensively summarized probabilistic models for short term traffic predictions. Moreover, K-Nearest Neighbors~\cite{may2008vector,cai2016spatiotemporal} is also applied to the short-term traffic prediction,~\citet{may2008vector} proposed a k-nearest neighbor algorithm to capture spatial correlations among different locations based on the definition of an appropriate distances function. 

Recently, deep learning-based approaches have been adopted to cope with traffic prediction problem. To capture long-term temporal dependence,~\citet{ma2015long} utilized Long Short-Term Neural Networks~(LSTM) to overcome the issue of back-propagated error decay.~\citet{zhang2016dnn} designed a deep learning-based prediction model based on spatial-temporal domain knowledge. Although both spatial and temporal dependence are considered in this model, the regular grid is inconsistent with the actual layout of traffic networks. To model non-Euclidean structure data, ~\citet{yu2017spatio} utilized spatial-temporal graph convolutional networks~(STGCN) to capture comprehensive spatial-temporal correlations with multi-scale traffic networks. Specifically, STGCN employed the fully convolutional structure on the temporal dimension to overcome the inherent deficiencies of RNN. Moreover, \citet{li2017diffusion} and~\citet{zhao2019t} respectively proposed Diffusion Convolutional Recurrent Neural Network that combines GCN with RNN and Temporal GCN~(T-GCN) that combines GCN with GRU to model spatial-temporal patterns of traffic data. Afterwards, \citet{zhu2020a3t} and ~\citet{park2019stgrat} embedded attention mechanism into T-GCN to further capture the dynamic traffic patterns.

\section{Preliminaries}
\subsection{Tensor Operation}
We will briefly introduce the standardized tensor conception according to the literature~\cite{kiers2000towards,kolda2009tensor}. To distinguish tensor, matrix, and vector, we respectively denote them with boldface Euler script, bold uppercase, and bold lowercase. e.g., $\mcX,\X$, and $\x$. Tensor is defined as a multi-dimensional array and a $M$-dimensional tensor can be represented as $\mcX{\in{\mathbb{R}^{I_1\times{I_2}\times\cdots\times{I_M}}}}$, where $M$ is the number of dimensions and $I_M$ indicates the size of the $M$-th dimension of tensor $\mcX$. Similar to the use of matrix notation, the element at the index position $(i_1,i_2,\cdots,i_M)$ is denoted as $\mcX_{i_1i_2\cdots i_M}$, and we use the colon to denote the full range of a given index. 
Furthermore, a tensor can be reordered into a matrix by unfolding or flattening, the mode-n matricization of a tensor $\mcX$ is denoted as $\X_{(n)}$, the size is $I_n\times{\big(\prod_{m=1,m\neq{n}}^{N} I_m\big)}$. The $n$-mode product of a tensor is used to calculate a multiplication with a matrix in the mode $n$. The $n$-mode product of  $\mcX\in\mathbb{R}^{I_1\times{I_2}\times\cdots\times{I_M}}$ with a matrix $\U\in{\mathbb{R}^{J\times{I_n}}}$ is denoted as $\mcX\times_n{\U}$. Formally, the tensor product is calculated as
\begin{align}
(\mcX\times_n{\U})_{i_1\cdots i_{n-1}ji_{n+1}\cdots{i_M}}=\sum_{i_n=1}^{I_n}\mcX_{i_1i_2...i_M}\U_{ji_n},
\end{align}
where $(\mcX\times_n{\U})\in
\mathbb{R}^{I_1\times\cdots\times{I_{n-1}}\times{J}\times{I_{n+1}}\times\cdots\times{I_M}}$.

In addition, the tensor-matrix multiplication satisfies the commutative law and the associative law:
\begin{itemize}
	\item[-] Given matrices $\U\in\mbR^{J \times{I_n}}$,$\V\in\mbR^{K\times{I_m}}$ and $n\neq{m}$, then we have
	\begin{align}
	\mcX\times_m{\U}\times_n{\V}&=\mcX\times_n{\V}\times_m{\U}
	\end{align}
	\item[-] Given matrices $\U\in\mbR^{J \times{I_n}}$,$\V\in\mbR^{{K}\times{J}}$ , then
	\begin{align}
	\mcX\times_n{\U}\times_n{\V}=\mcX\times_n{({\V}{\U})}.
	\end{align}
	
\end{itemize}
Tensor decomposition is often used to capture the principal components of a tensor, which is widely used for signal processing~\cite{sidiropoulos2017tensor}. As a representative, Tucker decomposition~\cite{tucker1951method,tucker1964contributions,tucker1966some} is regarded as a higher-order generalization of matrix singular value decomposition~(SVD) and principal component analysis~(PCA). It factorizes a tensor into a much smaller core tensor and factor matrices along with each mode. Concretely, the tensor $\mcX\in\mathbb{R}^{I_1\times{I_2}\times\cdots\times{I_M}}$ can be decomposed into
\begin{align}
\mcX=\mcC\times_1{\A^{(1)}}\times_2{\A^{(2)}}...\times_N{\A^{(M)}},
\end{align}
where $\mcC\in\mbR^{d_1\times d_2\times\cdots\times d_M}$ is a core tensor, $d_i$ donates the numbers of components used to summarize the entities in the mode $i$ and $\A^{(i)}\in\mbR^{ I_i \times d_i}$ is the factor matrix of the $i$-th mode ~\cite{kiers2001three}. In particular, below we consider the case that the order of $\mcX$ is equal to 3, which can be written as $\mcX{\in{\mathbb{R}^{I_1\times{I_2}\times{I_3}}}}$. But our method could be easily generalized into higher-order tensors without any extra limitations/constraints.

\subsection{Graph Convolution}
From the perspective of graph signal processing, spectral convolution on graphs was introduced in the literature~\cite{bruna2013spectral}. A graph is defined as a triple $\texttt{G}=(\texttt{V},\A,\x)$, where $\texttt{V}$ is a set of $N$ nodes, $\A\in\mbR^{N\times N}$ is the adjacency matrix, and $\x\in\mbR^{N}$ is the graph signal. Each node is associated with a signal.

Spectral convolution~\cite{chung1997spectral} is defined as $\y=\U g(\mathbf{\Lambda}) \U^\top\x$, where $g(\cdot)$ is a filter function on spectrum matrix $\Gamma$, $\U$ is a transformer from space domain to frequency domain. We may use normalized graph Laplacian $\L=\I_N-\D^{-\frac{1}{2}}\A\D^{\frac{1}{2}}$ or normalized adjacency matrix $\tbA=\D^{-\frac{1}{2}}\A\D^{\frac{1}{2}}$ to obtain $\U, \Gamma$ through singular value decomposition (SVD). Here $\I_N\in\mbR^{N\times{N}}$ is an identity matrix and $\D\in\mbR^{N\times N}$ is the degree matrix.
Formally, we have $\L=\U\mathbf{\Lambda}\U^\top$ or $\tbA=\U\mathbf{\Lambda}{\U}^\top$, where $\mathbf{\Lambda}$ is a diagonal matrix.

Due to the expensive computation of eigenvalue decomposition, we may use some polynomial to approximate filter function $g(\cdot)$ as used in ~\cite{defferrard2016convolutional}. In a simple fashion, we use the $n$-order polynomial to approximate $g(x)\approx\sum_{i=0}^n \theta_i x^i$. Accordingly, the convolution on the graph can be rewritten as
\begin{align}
\y=\sum_{k=0}^{n} \theta_k\L^k \x \quad \text{or} \quad \y=\sum_{k=0}^{n} \theta_k\tbA^k \x,
\end{align}
where ${\theta_k| k=0, \cdots, n}$ is the filter parameters to be learned. Deeper details/contents could be found in the literature in the above related work.
\section{Problem Description}
It is a natural process to view traffic networks as spatial-temporal graph-structure data. For traffic networks, we can use the adjacency matrix $\A$ to describe connection relations between different traffic locations. At each timestamp, we construct a graph $\verb"G"$ to express the states of the traffic networks, denoted as $\texttt{G}=(\texttt{V},\A,\X)$, where $\texttt{V}$ is a set of $N$ nodes w.r.t traffic locations. The state matrix $\X\in\mbR^{N\times D}$ records traffic flows/speeds of all nodes, where each row corresponds to one node and $d$ is the feature dimensionality of nodes. The adjacency matrix $\A\in\mbR^{N\times N}$ describes connections between different nodes (i.e., locations).

In the task of traffic prediction, at the time stamp $t$, given the historical observations from previous $T$ time stamps, we expect to accurately estimate the node states of the next $T'$ time slice. Formally,
\begin{align}\small
(\hby^{(t+1)},\hby^{(t+2)},\cdots,\hby^{(t+T')})=\varphi(\texttt{G}^{(t)},\texttt{G}^{(t-1)},\cdots,\texttt{G}^{(t-T+1)}),
\end{align}
where $\varphi$ is the prediction function, and $\texttt{G}^{(t)}=(\texttt{V},\A,\X^{(t)})$. As the traffic locations are usually invariant, commonly, we need to only predict the future traffic speeds/flows $(\hby^{(t+1)},\hby^{(t+2)},...,\hby^{(t+T')})$ such that the traffic condition could be coordinated in time. In the following section, we reduce this problem into the graph scheme and propose a spatial-temporal tensor graph convolution method to estimate the states of future time slices.

\section{Spatial-Temporal Tensor Graph Convolution}

Given the observed historical traffic states of the previous $T$ time slice, we can use one structured graph to model the traffic information at each timestamp. Thus we can have these graph data $\{\texttt{G}^{(t)},\texttt{G}^{(t-1)},\cdots,\texttt{G}^{(t-T+1)}\}$, our aim is to estimate future traffic speeds/flows. Considering the complicated spatial-temporal correlations between traffic networks, we respectively conduct graph convolution on spatial and temporal domains, which could be regarded as multi-graph convolutional networks.

Concretely, we collect the features/embeddings of all nodes,  $\{\X^{(t)},\X^{(t-1)},...,\X^{(t-T+1)}\}$ w.r.t previous $T$ time slices, and stack them into a 3-D tensor $\mcX\in {\mbR^{{N}\times{D}\times{T}}}$. The frontal slices $\mcX_{::t}=\mathbf{X}^{(t)}$ represents nodes' embeddings at time $t$.
Further, we formulate spatial-temporal graph convolution into the tensor product,
\begin{align}
\widetilde{\mcX}=\sum_{k_\text{S},k_\text{T}=0}^{p} \mcX\times_1 \A_{\text{S}}^{k_\text{S}}~\widetilde{\times}_3~\mcA_{\text{T}}^{k_\text{T}} \times_2\Theta_{k_\text{S}k_\text{T}},
\label{eqn:wt_mcX}
\end{align}
where
\begin{itemize}
	\item[-] Batch tensor product $\widetilde{\times}_{3}$: The operator performs batch processing in the first dimension. Given two tensors, $\mcA\in\mbR^{N\times D\times T}$ and $\mcB\in\mbR^{N\times T\times T}$, we define the calculation as follows:
	\begin{align}
	[\mcA~\widetilde{\times}_3~\mcB]_{n::} = \mcA_{n::}\times~\mcB_{n::}.\label{eqn:batch_cal}
	\end{align}
	\item[-] Spatial adjacency matrix $\A_\text{S}\in\mbR^{N\times N}$: It expresses the relations between different spatial traffic locations. In practice, due to the  invariance of traffic locations, we set the shared spatial adjacent relations, i.e., a common adjacency matrix $\A_\text{S}$ for all time slices.
	\item[-] Temporal adjacency tensor $\mcA_\text{T}\in\mbR^{N\times T\times T}$: The connections in the temporal are built into the tensor for each node.
	Together with spatial adjacent relations, we define the temporal connection structure of traffic networks.
	\item[-] Filtering parameters $\Theta\in\mbR^{D'\times D}$: The original feature space of $D$-dimension would be embedded into a new $D'$-dimension space through a tensor product with $\Theta$, which is required to be learned during training.
	\item[-] The order number $k_\text{S}, k_\text{T}\in\verb"N"$: The power calculation in the spatial domain, i.e., $\A_{\text{S}}^{k_\text{S}}$, follows the general matrix power operation. The power operation on tensor takes the way of batch processing at the first dimension, i.e., $[\mcA_{\text{T}}^{k_\text{T}}]_{n::}=[\mcA_{\text{T}}]_{n::}^{k_\text{T}}$. For the zero-order calculation, we have $\A^0=\I$. The high-order power matrix $\A^k$ indicates the path reachability within $k$ hops, which reflects connection relations of long-distant scope.
\end{itemize}
In the above formula, we successively conduct three operations on an input feature tensor: spatial aggregation, temporal aggregation, and feature transformation, finally produce the encoded feature tensor $\widetilde{\mcX}\in\mbR^{N\times D'\times T}$. It is worth noting that, the sorting on three operations could be randomly arranged, and cannot yet change the final calculation result because of the satisfactory property of the commutative law of tensor multiplication.

The aim is to estimate traffic speeds/flows of the next $T'$ moments at different locations. To this end, we define a transformation from encoded features to expected results as
\begin{align}
\hbY=\sigma(\widetilde{\X}_{(1)}\times \W + \b),
\end{align}
where the matrix $\W\in\mbR^{D'T\times T'}$ is the projection operation to be learned, $\b$ is a bias term, $\sigma$ is a non-linear activation function, and $\widetilde{\X}_{(1)}\in\mbR^{N\times D'T}$ takes a widely-used tensor notation to flatten other dimensions except the specified dimension (here the first dimension). The estimated result
$\hbY=[\hby^{t+1},\hby^{t+2},\cdots,\hby^{t+T'}]\in\mbR^{N\times T'}$ are expected to be equal to real values during training. Therefore, we have the objective function as
\begin{align}
\min_{\Theta,\W,\b}\quad \|\Y-\hbY\|_F^2 + \lambda \times \varsigma(\Theta,\W,\b),
\end{align}
where $\varsigma$ is the regularization term with $L_2$-norm on the parameters therein, and the balance parameter $\lambda$ is usually set as a small fraction (e.g., 1.0E-5).

\section{Factorized Tensor Convolution}

To reduce the computational burden, we take the tensor-factorized way to perform spatial-temporal graph convolution. The advantages are three folds: i) reduce computational complexity, ii) save memory resources, iii) full parallelizability in three operations (i.e., spatial aggregation, temporal aggregation, and feature transformation), and iv) reduce the effects of input noises. Please see a more detailed analysis, which is discussed in the following section.

To simplify the derivation, we consider the case when $k_\text{S}=1$ and $k_\text{T}=1$ in (\ref{eqn:wt_mcX}), and omit those unnecessary superscripts and subscripts. Then we can have the tensor-matrix multiplication as,
\begin{align}
\widetilde{\mcX}=\mcX\times_1\A_{\text{S}}~\widetilde{\times}_3~{\mcA_{\text{T}}}\times_2{\Theta},
\label{eqn:tbX}
\end{align}
where $\mcX\in\mbR^{N\times D\times T}$ is the input feature tensor, $\A_{\text{S}}\in\mbR^{N\times N}$ is the adjacency matrix of nodes, $\mcA_{\text{T}}\in\mbR^{N\times T\times T}$ is the temporal relation tensor of all nodes and $\Theta\in\mbR^{D'\times D}$ is the feature transformation matrix.

According to Tucker decomposition, we decompose the input tensor $\mcX$ into several components, i.e.,
\begin{align}
\mcX\approx \mcC\times_1 \X_\text{S}\times_2\X_\text{F}\times_3\X_\text{T},\label{eqn:mcX}
\end{align}
where the core tensor is $\mcC\in {\mbR^{{n}\times{d}\times{t}}}$, the spatial component is matrix $ \X_S\in{\mbR^{N\times{n}}}$, the feature component is matrix $\X_F\in{\mbR^{D\times{d}}}$, and the temporal component is matrix $ \X_T\in{\mbR^{T\times{t}}}$. The three components are independent of each other. Moreover, we often have the constraints, $n\ll N, t\ll T, d\ll D$, which largely reduce the computational burden on time and memory requirements. In the case of tensor decomposition, we can derive the following proposition about the computation of encoded features.

\begin{prop} Given the tensor decomposition of $\mcX$ in (\ref{eqn:mcX}), we can compute the spatial-temporal encoded feature $\widetilde{\mcX}$ in (\ref{eqn:tbX}) by the following formulas
	\begin{align}
	[\widetilde{\mcX}]_{k::}~~&\approx \mcC\times_1 [\tbX_\text{S}]_{k:}~{\times}_3~[{\mcX}_{\text{T}}]_{k::}\times_2\tbX_\text{F},\label{eqn:prop_mcX_k}\\
	\tbX_\text{S} &= \A_\text{S}\times \X_\text{S},\label{eqn:prop_tbX_S}\\
	[\widetilde{\mcX}_\text{T}]_{k::} &= [\mcA_\text{T}]_{k::}\times \X_\text{T},\label{eqn:widetilde_mcX}\\
	\tbX_\text{F} &= \Theta \times \X_\text{F},\label{eqn:prop_tbX_F}
	\end{align}
	where $\tbX_\text{S}\in \mbR^{{N}\times{n}}, \tbX_\text{F}\in \mbR^{{D'}\times{d}}$, and $\widetilde{\mcX}_\text{T}\in \mbR^{N\times{T}\times{t}}$. $[\tbX_\text{S}]_{k:}$ and $[{\mcX}_{\text{T}}]_{k::}$ correspond to spatial and temporal correlation information for node $k$, respectively. Moreover, the three components $\tbX_\text{S}, \tbX_\text{F}, \widetilde{\mcX}_\text{T}$ are irrelevant to each other and so may be calculated in a parallel way.
\end{prop}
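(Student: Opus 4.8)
\emph{Proof proposal.} The plan is to insert the Tucker factorization (\ref{eqn:mcX}) into the definition (\ref{eqn:tbX}) and then transport each of the three operators onto the matching factor. Substituting gives
\[
\widetilde{\mcX}\approx(\mcC\times_1\X_\text{S}\times_2\X_\text{F}\times_3\X_\text{T})\times_1\A_\text{S}~\widetilde{\times}_3~\mcA_\text{T}\times_2\Theta .
\]
The mode-1 operator $\times_1\A_\text{S}$ and the mode-2 operator $\times_2\Theta$ are ordinary $n$-mode products, so by the commutative law they can be moved next to the factors $\X_\text{S}$ and $\X_\text{F}$ and then merged by the associative law, yielding $\mcC\times_1(\A_\text{S}\X_\text{S})=\mcC\times_1\tbX_\text{S}$ and $\mcC\times_2(\Theta\X_\text{F})=\mcC\times_2\tbX_\text{F}$; these are precisely (\ref{eqn:prop_tbX_S}) and (\ref{eqn:prop_tbX_F}). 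The temporal operator $\widetilde{\times}_3~\mcA_\text{T}$, however, is the batch product (\ref{eqn:batch_cal}) rather than a genuine $n$-mode product, and since its operand $\mcA_\text{T}$ is itself indexed by the first (node) mode, it does not commute with $\times_1\A_\text{S}$. Hence the global rewriting breaks down exactly at the temporal mode, and I would instead prove the identity one node at a time.

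Second, I would fix a node index $k$ and compute the frontal slice $[\widetilde{\mcX}]_{k::}$ directly. Because the mode-2 and mode-3 operations do not touch the first index, slicing commutes with them, so everything reduces to matrix algebra on the $k$-th slice: the mode-2 product becomes left multiplication by $\Theta$ and the batch product becomes right multiplication by $[\mcA_\text{T}]_{k::}$ by (\ref{eqn:batch_cal}). The mode-1 operator turns the slice into the row-$k$ weighted combination of the input slices; feeding in the Tucker form and using linearity of the mode-1 product collapses this combination into a single mode-1 product against the $k$-th row of $\A_\text{S}\X_\text{S}$, i.e. $\mcC\times_1[\tbX_\text{S}]_{k:}$ with $[\tbX_\text{S}]_{k:}=[\A_\text{S}\X_\text{S}]_{k:}$. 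Folding the left factor $\Theta$ into $\X_\text{F}$ gives $\tbX_\text{F}$, and folding the right factor $[\mcA_\text{T}]_{k::}$ into $\X_\text{T}$ gives the temporal slice $[\widetilde{\mcX}_\text{T}]_{k::}=[\mcA_\text{T}]_{k::}\X_\text{T}$ of (\ref{eqn:widetilde_mcX}). Collecting the three absorbed factors reproduces (\ref{eqn:prop_mcX_k}), with the ``$\approx$'' inherited solely from the Tucker approximation in (\ref{eqn:mcX}).

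The step I expect to be the main obstacle is this last absorption of $[\mcA_\text{T}]_{k::}$ into the temporal factor, because it is where the non-standard batch product must be reconciled with the ordinary mode-3 product of the Tucker form. The care needed is purely in the orientation/transpose bookkeeping of the $n$-mode product from its definition: I would matricize both the $k$-th slice of $\mcC\times_3\X_\text{T}$ and the candidate mode-3 product $\mcC\times_3[\widetilde{\mcX}_\text{T}]_{k::}$ explicitly, and check that right multiplication by $[\mcA_\text{T}]_{k::}$ matches the mode-3 product with $[\mcA_\text{T}]_{k::}\X_\text{T}$, using the per-node structure of $\mcA_\text{T}$ where required. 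Once this matrix identity is verified, the final claim follows with no further work: $\tbX_\text{S}$ depends only on $(\A_\text{S},\X_\text{S})$, $\tbX_\text{F}$ only on $(\Theta,\X_\text{F})$, and $\widetilde{\mcX}_\text{T}$ only on $(\mcA_\text{T},\X_\text{T})$, so the three factors draw on disjoint inputs and can be computed fully in parallel before the final contraction against the small core tensor $\mcC$.
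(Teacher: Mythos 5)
Your proposal is correct and takes essentially the same route as the paper's own proof: substitute the Tucker factorization into (\ref{eqn:tbX}), merge $\A_\text{S}$ and $\Theta$ into the spatial and feature factors via the commutative and associative laws of the $n$-mode product, and resolve the batch temporal product by passing to the $k$-th node slice and folding $[\mcA_\text{T}]_{k::}$ into $\X_\text{T}$ through mode-3 associativity. Your explicit observation that $\widetilde{\times}_3~\mcA_\text{T}$ cannot be commuted past $\times_1\A_\text{S}$ (which is what forces the per-node treatment) is a correct and slightly more careful account of why the paper, too, keeps the batch product last and only eliminates it after slicing.
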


\begin{proof}
	\renewcommand{\qedsymbol}{}
	We substitute (\ref{eqn:mcX}) into (\ref{eqn:tbX}), and then have
	\begin{align}
	\widetilde{\mcX}&= \mcX\times_1\A_{\text{S}}~\widetilde{\times}_3~{\mcA_{\text{T}}}\times_2{\Theta}\nonumber\\
	& \approx (\mcC\times_1 \X_\text{S}\times_2\X_\text{F}\times_3\X_\text{T}) \times_1\A_{\text{S}}~\widetilde{\times}_3~{\mcA_{\text{T}}}\times_2{\Theta}
	\end{align}
	According to the associative law and the commutative law for tensor-matrix multiplication, we can rewrite $\widetilde{\mcX}$ as
	\begin{align}
	\widetilde{\mcX}
	& \approx (\mcC\times_1 (\A_{\text{S}}\times\X_\text{S})\times_2(\Theta\times\X_\text{F})\times_3\X_\text{T}) ~\widetilde{\times}_3~{\mcA_{\text{T}}}\nonumber\\
	& =(\mcC\times_1 \tbX_\text{S}\times_2\tbX_\text{F}\times_3\X_\text{T}) ~\widetilde{\times}_3~{\mcA_{\text{T}}},
	\end{align}
	where $\tbX_\text{S}$ and $\tbX_\text{F}$ are defined in the proposition.
	Based on the batched tensor multiplication in (\ref{eqn:batch_cal}), we take the $k$-th slice of the first dimension w.r.t nodes, and rewrite it as
	\begin{align}
	[\widetilde{\mcX}]_{k::}
	& \approx [\mcC\times_1 \tbX_\text{S}\times_2\tbX_\text{F}\times_3\X_\text{T}]_{k::} ~{\times}_3~{[\mcA_{\text{T}}]_{k::}}\nonumber\\
	& =(\mcC\times_1 [\tbX_\text{S}]_{k:}\times_2\tbX_\text{F}\times_3[\X_\text{T}]) ~{\times}_3~{[\mcA_{\text{T}}]_{k::}}\nonumber\\
	& =\mcC\times_1 [\tbX_\text{S}]_{k:}\times_2\tbX_\text{F}\times_3({[\mcA_{\text{T}}]_{k::}}\times[\X_\text{T}])\nonumber\\
	& = \mcC\times_1 [\tbX_\text{S}]_{k:}\times_2\tbX_\text{F}\times_3[\widetilde{\mcX}_{\text{T}}]_{k::},
	\end{align}
	where $\widetilde{\mcX}_{\text{T}}$ is defined in (\ref{eqn:widetilde_mcX}). In the above derivation process, we sometimes preserve the original dimension when taking the slice operation, e.g., $[\tbX_\text{S}]_{k:}\in\mbR^{1\times n}$, which is clear in the context.
\end{proof}

Further, we can extend the above tensor calculation to the high-order cases, where $k_\text{S}$ and $k_\text{T}>1$. At this time, we only need to update the corresponding adjacency matrices with different orders. Finally, we can aggregate multi-scale receptive field responses as formulated in (\ref{eqn:wt_mcX}).

\textbf{Advantages of Factorization}~~In contrast to the original spatial-temporal tensor graph convolution,
the case of tensor decomposition have four aspects of advantages. Below we provide a detailed analysis.
\begin{itemize}
	\item[-] Computation complexity: As the crucial step is the computation of feature encoding in (\ref{eqn:tbX}), we mainly analyze its computational complexity. In the general convolution of tensor multiplication, the complexity is about $\mcO(N^2DT+ NDT^2+NDD'T)$.
	
	In the factorized version, we need to compute the formulas (\ref{eqn:mcX})$\sim$(\ref{eqn:prop_tbX_F}). The cost for Tucker decomposition with higher order
	orthogonal iteration~(HOOI)~\cite{de2000best} in (\ref{eqn:mcX}) is about $\mcO(NT(7d^2+d)+ND(7t^2+t)+TD(7n^2+n)+NTD(n+t+d)+N(n^2+n)+T(t^2+t)+D(d^2+d)+\frac{11}{3}(n^3+t^3+d^3))$ according to the analysis in the literature~\cite{comon2009tensor,elden2009newton}\footnote{Of course, some fast algorithms may also be employed to speed up tensor decomposition.}, and four last terms are often not taken into consideration. The complexity of calculations from (\ref{eqn:prop_tbX_S}) to (\ref{eqn:prop_tbX_F}) is about $\mcO(N^2n+DD'd+NT^2t)$, which contains the computation in spatial domain, temporal domain and feature domain. The decoding process in (\ref{eqn:prop_mcX_k}) is about $\mcO(Nndt+ NTdt+ND'Td)$.
	
	As there exist $n\ll N, d\ll D, t\ll T$ and $D'\sim D$ in practice, and $T\leq N$, $D\leq N$ or $D\sim N$ at most for large traffic networks, so the cost mainly takes at the step of tensor decomposition in (\ref{eqn:mcX}). Meantime, if we let $n<\sqrt{N}$, which is mostly feasible,
	the computational complexity of the factorized version is about $\mcO(NDT(n+d+t))$, which is far lower than the original $\mcO(N^2DT+ NDT^2+NDD'T)$.
	\item[-] Memory requirement: After tensor decomposition, we only need the memory space of $nN+tT+dD+ndt$, which is far less than the original memory space $NDT$. Subsequently, for the factorized version, the memory spaces in the next calculations would be decreased compared with the original version.
	\item[-] Parallel computation: After tensor decomposition, the computation in spatial domain, temporal domain, and feature domain, i.e., (\ref{eqn:prop_tbX_S})$\sim$ (\ref{eqn:prop_tbX_F}), can be conducted in a fully parallel way. In contrast, these three operations in (\ref{eqn:wt_mcX}) cannot be performed for parallelization.
	
	\item[-] Noise suppression: In tensor decomposition, we preserve those most crucial components by discarding those trivial components. Commonly, those trivial components contain more noise information. This phenomenon is validated for many subspace methods such as principal component analysis (PCA). Hence, our method can benefit from tensor decomposition, which can suppress noises to some extent. In the following experiment, we also verify this point by comparing the factorized version with the original ST-TGCN.
\end{itemize}
\begin{table*}[ht]
	\caption{Comparisons with state-of-the-art methods~(* means the value is pretty small).}
	\label{tab:table1}
	\centering
	\begin{tabular}{c|c|*4{c} *1{c|}*5{c}}
		\toprule
		\multirow{2}*{Model}&\multirow{2}*{Time}&\multicolumn{5}{c|}{SZ-taxi}&\multicolumn{5}{c}{Los-loop}\\
		{}&{}&$RMSE$&$MAE$&$Accuracy$&${R^2}$ &$var$&$RMSE$&$MAE$&$Accuracy$&${R^2}$ &$var$\\
		\midrule
		
		{HA}&15min      &4.2951  &2.7815  &0.7008  &0.8307 &0.8307  &7.4427  &4.0145  &0.8733  &0.7121  &0.7121\\
		\midrule
		
		\multirow{4}*{ARIMA}
		&15min   &7.2406  &4.9824  &0.4463  &*      &0.0035  &10.0439 &7.6832  &0.8275  &0.0025  &*\\
		&30min   &6.7899  &4.6765  &0.3845  &*      &0.0081  &9.3450  &0.6891  &0.8275  &0.0031  &* \\
		&45min   &6.7852  &4.6734  &0.3847  &*      &0.0087  &10.0508 &7.6924  &0.8273  &*       &0.0035\\
		&60min   &6.7708  &4.6655  &0.3851  &*      &0.0111  &10.0538 &7.6952  &0.8273  &*       &0.0036\\
		\midrule
		\multirow{4}*{SVR}
		&15min   &4.1455  &2.6233  &0.7112  &0.8423  &0.8424   &6.0084  &3.7285  &0.8977  &0.8123 &0.8146   \\
		&30min   &4.1628  &2.6875  &0.7100  &0.8410  &0.8413   &6.9588  &3.7248  &0.8815  &0.7492  &0.7523\\
		&45min   &4.1885  &2.7359  &0.7082  &0.8391  &0.8397   &7.7504  &4.1288  &0.8680  &0.6899  &0.6947  \\
		&60min   &4.2156  &2.7751  &0.7063  &0.8370  &0.8379   &8.4388  &4.5036  &0.8562  &0.6336  &0.5593  \\
		\midrule
		\multirow{4}*{GCN}
		&15min   &5.6596  &4.2367  &0.6107  &0.6654   &0.6655    &7.7922  &5.3525  &0.8673  &0.6843  &0.6844\\
		&30min      &5.6918  &4.2647  &0.6085  &0.6616   &0.6617    &8.3353  &5.6118  &0.8581  &0.6402  &0.6404\\
		&45min &5.7142  &4.2844  &0.6069  &0.6589   &0.6564    &8.8036  &5.9534  &0.8500  &0.5999  &0.6001\\
		&60min   &5.7361  &4.3034  &0.6054  &0.6590   &0.6564    &9.2657  &6.2892  &0.8421  &0.5583  &0.5593\\
		\midrule
		\multirow{4}*{GRU}
		&15min   &3.9994  &2.5955  &0.7249  &0.8329   &0.8329   &5.2182  &3.0602  &0.9109  &0.8576&0.8577\\
		&30min      &4.0942  &2.6906  &0.7184  &0.8249   &0.8250   &6.2802  &3.6505  &0.8931  &4.0915&0.7958\\
		&45min &4.1534  &2.7743  &0.7143  &0.8198   &0.8199   &7.0343  &4.5186  &0.8801  &0.7446&0.7451\\
		&60min    &4.0747  &2.7712  &0.7197  &0.8266   &0.8267   &7.6621  &0.7957  &0.8694  &0.6980&0.6984\\
		\midrule
		\multirow{4}*{STGCN}
		&15min   &3.9941  &2.6608  &0.7211  &0.8525  &0.8529    &6.0844  &3.3577  &0.8958  &0.8155  &0.8162\\
		&30min      &4.0336  &2.6937  &0.7186  &0.8498  &0.8503    &7.6831  &4.1249  &0.8686  &0.7066  &0.7098\\
		&45min &4.0553  &2.7205  &0.7173  &0.8482  &0.8491    &8.6429  &4.6632  &0.8523  &0.6295  &0.6358\\
		&60min   &4.0666  &2.7334  &0.7169  &0.8475  &0.8484    &9.4822  &5.1523  &0.8380  &0.5547  &0.5649\\
		\midrule
		\multirow{4}*{T-GCN}
		&15min  &3.9162  &2.7061  &0.7306  &0.8541   &0.8626   &5.1264  &3.1802  &0.9172  &0.8634  &0.8634\\
		&30min      &3.9617  &2.7452  &0.7275  &0.8523   &0.8523   &6.0598  &3.7466  &0.8968  &0.8098  &0.8100\\
		&45min &3.9950  &2.7666  &0.7552  &0.8509   &0.8509   &6.7065  &4.1158  &0.8857  &0.7679  &0.7684\\
		&60min    &4.0141  &2.7889  &0.7238  &0.8503   &0.8504   &7.2677  &4.6021  &0.8762  &0.7283  &0.7290\\
		\midrule
		\multirow{4}*{A3T-GCN}
		&15min   &3.8989  &2.6840  &0.7318  &0.8512  &0.8512    &5.0904  &3.1365  &0.9133  &0.8653  &0.8653\\
		&30min      &3.9228  &2.7038  &0.7302  &0.8493  &0.8493    &5.9974  &3.6610  &0.8979  &0.8137  &0.8137\\
		&45min &3.9461  &2.7261  &0.7286  &0.8474  &0.8474    &6.6840  &4.1712  &0.8861  &0.7694  &0.7705\\
		&60min    &3.9707  &2.7391  &0.7269  &0.8454  &0.8454    &7.0990  &4.2343  &0.8790  &0.7407  &0.7415\\
		\midrule
		\multirow{4}*{\tabincell{c}{factorized\\ST-TGCN}}
		&15min      &\textbf{3.1080}  &\textbf{2.0198}  &\textbf{0.7835}  &\textbf{0.9114}  &\textbf{0.9114}  &\textbf{3.5969}  &\textbf{2.2265}  &\textbf{0.9387}  &\textbf{0.9330}  &\textbf{0.9330}\\
		&30min      &\textbf{3.5181}  &\textbf{2.2951}  &\textbf{0.7548}  &\textbf{0.8865}  &\textbf{0.8865}  &\textbf{4.9283}  &\textbf{2.8690}  &\textbf{0.9159}  &\textbf{0.8749}  &\textbf{0.8749}\\
		&45min      &\textbf{3.6039}  &\textbf{2.3689}	&\textbf{0.7488}  &\textbf{0.8809} 	&\textbf{0.8809}  &\textbf{5.5733}  &\textbf{3.3600}  &\textbf{0.9049}  &\textbf{0.8402}  &\textbf{0.8409}\\
		&60min      &\textbf{3.7358}  &\textbf{2.4476}  &\textbf{0.7396}  &\textbf{0.8720} 	&\textbf{0.8720}  &\textbf{5.8225}  &\textbf{3.4772}  &\textbf{0.9006}  &\textbf{0.8258}  &\textbf{0.8263}\\
		\bottomrule
	\end{tabular}
	\vspace{-5pt}
\end{table*}
\section{Experiments}
In this section, we first introduce the two public datasets used to evaluate the methods, then compare the proposed factorized ST-TGCN with existing state-of-the-art works, and finally analyze our factorized ST-TGCN by conducting the ablation study.    

\subsection{Datasets} 
\textbf{SZ-taxi}: This dataset is collected from a taxi trajectory of Shenzhen, China. It defines 156 major roads as nodes, which are characterized by the speed at which taxis pass through the roads. And the adjacency matrix represents the positional connections between roads. These data are collected every 15 minutes, ranging from 1/1/2015 to 1/31/2015.

\textbf{Los-loop}: This dataset collected by loop detectors deployed on the highway of Los Angeles in America. It contains 207 nodes characterized by the speed collected by loop detectors. And adjacency matrix represents the positional connections between loop detectors. These data are collected every 5 minutes, ranging from 3/1/2012 to 3/7/2012.

For the experiments on these two datasets, we strictly follow the widely adopted protocol in previous works~\cite{zhao2019t}. Given the SZ-taxi or Los-loop dataset, we split 80\% of data for training and 20\% of data for testing. We predict the traffic speed for prediction horizons from 15 to 60 minutes by using previous data of 12 time slices.

\subsection{Implementation details}
All experiments in this works are conducted on a single NVIDIA RTX2080Ti. In the experiments, We firstly utilize two fully connected layers to map nodes into a 128-dimension feature vectors. Then we construct two-layer graph convolutional neural networks with 128 and 64 filters respectively. In GCN layers, we use $relu(\cdot)$ as an activation function and order number $p$ is set equal to 2. Finally, we train our proposed factorized ST-TGCN for 500 epochs with a learning rate of 0.001 and the batch size as 32.

\subsection{Metrics}
Following~\cite{zhao2019t,zhu2020a3t}, we adopt the five metrics to comprehensively evaluate factorized ST-TGCN, 

\begin{itemize}
	\item[-] Root Mean Squared Error 
	\begin{align}\small
	RMSE=\sqrt{\frac{1}{N}\sum_{i=1}^{N}(\hat{y}^{i}-y^i)^2},
	\end{align}
	\item[-]  Mean Absolute Error
	\begin{align}\small
	MAE=\frac{1}{N}\sum_{i=1}^{N}|\hat{y}^i-y^i|,
	\end{align}
	\item[-] Accuracy
	\begin{align}\small
	Accuracy=1-	\frac{\sqrt{\sum_{i=1}^{N}{|\hat{y}^i-y^i|^2}}}{\sqrt{\sum_{i=1}^{N}{|y^i|^2}}},
	\end{align}
	\item[-] R-squared
	\begin{align}\small
	R^2=1-\frac{\sum_{i=1}^{N}{|\hat{y}^i-y^i|^2}}{\sum_{i=1}^{N}{|y^i-\frac{1}{N}\sum_{k=1}^{N}y^k|^2}},
	\end{align}
	\item [-] Variance 
	\begin{align}\small
	var=1-\frac{\frac{1}{N}\sum_{i=1}^{N}|(y^i-\hat{y}^i)-\frac{1}{N}\sum_{k=1}^{N}(y^k-\hat{y}^k)|^2}{\frac{1}{N}\sum_{i=1}^{N}|y^i-\frac{1}{N}\sum_{k=1}^{N}y^k|^2},
	\end{align}
\end{itemize}
where $y^i$, $\hat{y}^i$ represent ground-truth value and prediction value, $N$ is the number of nodes. As the square root operation of RMSE amplifies the gaps between ground-truth and prediction, MAE metric is introduced. Considering RMSE and MAE merely calculate the distance between the real data and the predicted data, three additional metrics named Accuracy, $R^2$, $var$ are introduced, where $R^2$ is the coefficient of determination and $var$ explains the variance score.

\begin{figure*}[h]
	\centering
	\vspace{-0.5cm}
	\begin{minipage}[t]{1\textwidth}
		\centering
		\includegraphics[width=17 cm]{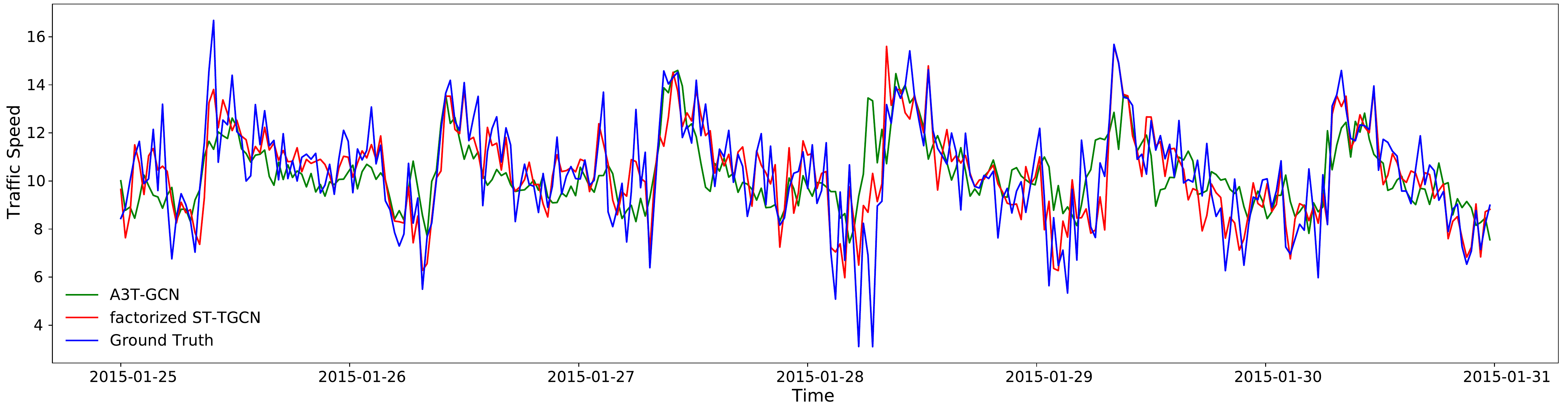}
		\centering
	\end{minipage}
	\caption{Coarse results of the speed prediction on the SZ-taxi for prediction horizons of 15 minutes}
	\label{fig:fig1}
\end{figure*}
\begin{figure*}[h]
	\centering
	\begin{minipage}[t]{1\textwidth}
		\centering
		\includegraphics[width=17cm]{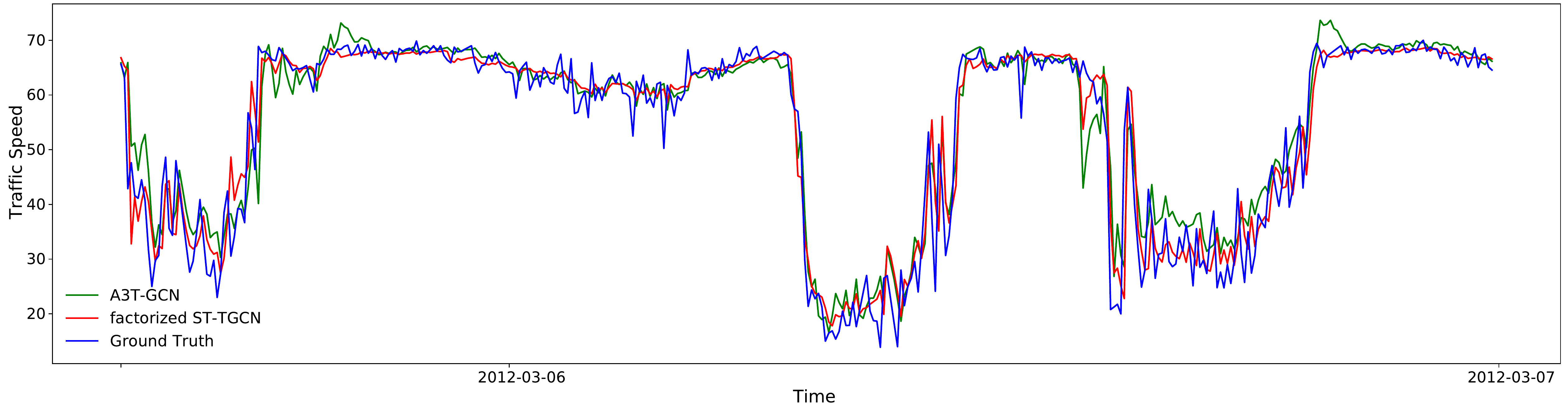}
		\centering
	\end{minipage}
	\caption{Coarse results of the speed prediction on the Los-loop for prediction horizons of 15 minutes}
	\label{fig:fig2}
	
\end{figure*}
\begin{figure*}[h]
	\centering
	\begin{minipage}[t]{0.49\textwidth}
		\centering
		\includegraphics[width=8cm]{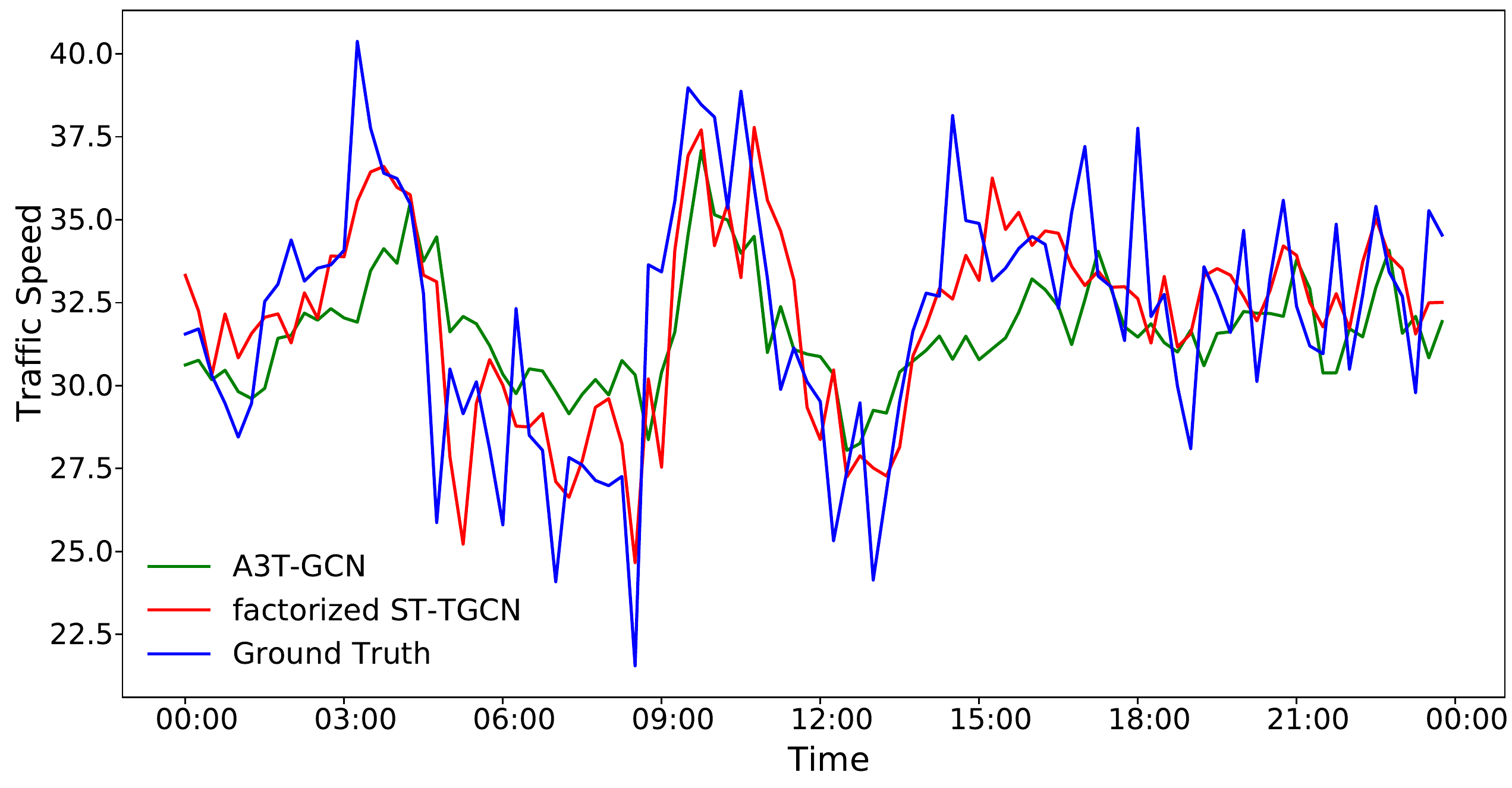}
		\centering
	\end{minipage}
	\begin{minipage}[t]{0.49\textwidth}
		\centering
		\includegraphics[width=8cm]{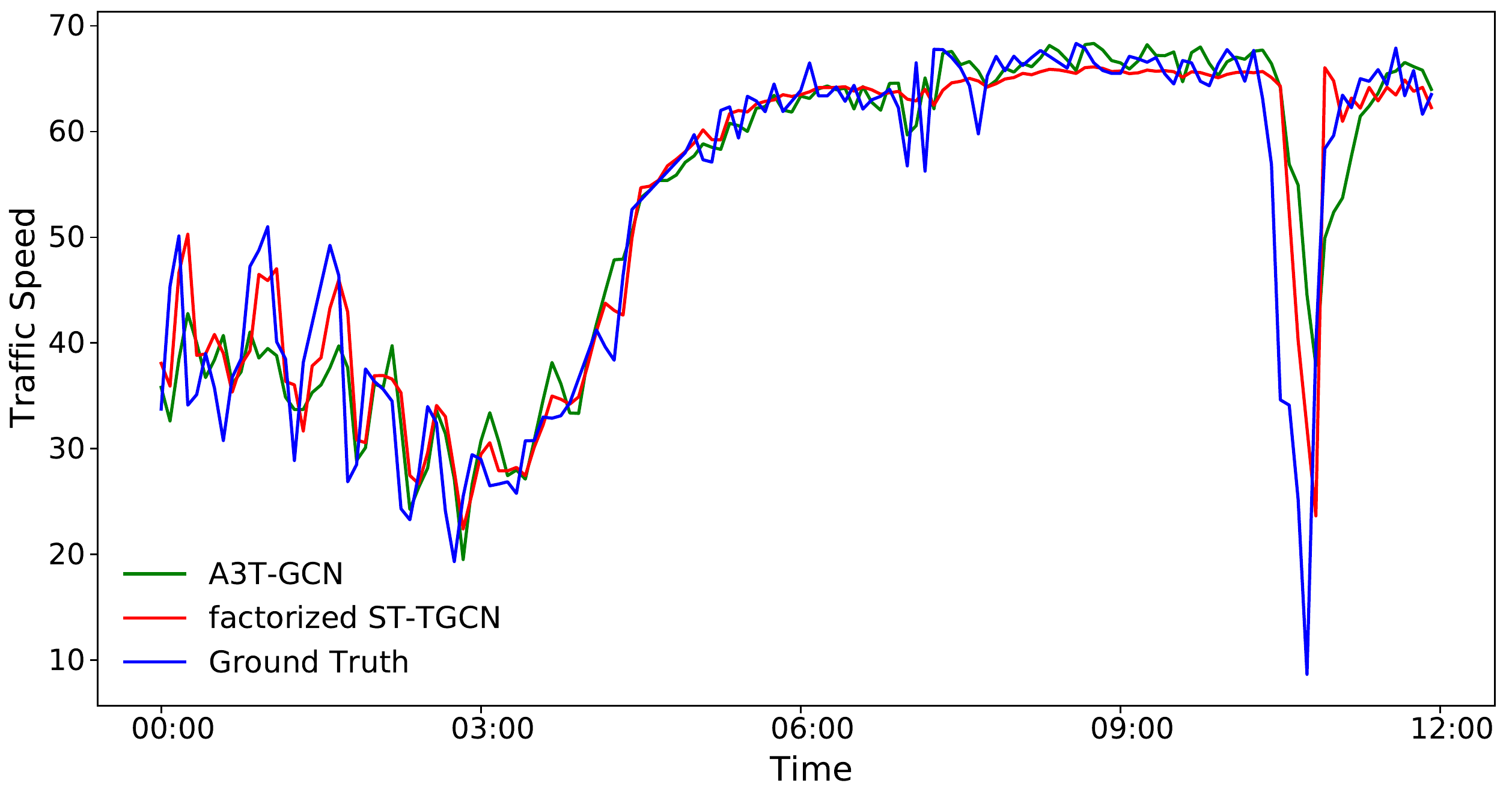}
		\centering
	\end{minipage}
	
	\caption{Fine results of the one day speed prediction on the SZ-taxi (left) and Los-loop (right) for prediction horizons of 15 minutes}
	\label{fig:fig3}
	\vspace{-10pt}
\end{figure*}
\subsection{Comparisons with the state-of-the-arts}
To comprehensively evaluate our proposed factorized ST-TGCN, we compare it with traditional methods~(HA, ARIMA~\cite{ahmed1979analysis}, and SVR~\cite{smola2004tutorial}) and deep learning-based methods~(GCN~\cite{kipf2016semi},  GRU~\cite{zhao2019t,cho2014properties}, STGCN~\cite{yu2017spatio}, T-GCN~\cite{zhao2019t}, and A3T-GCN~\cite{zhu2020a3t}). Since HA model simply computes an average of past traffic speed and uses the identical value as prediction value for horizons from 15 minutes to 60 minutes, we report its performance for the prediction horizons of 15 minutes. According to Table \ref{tab:table1}, overall, deep learning-based methods generally outperform the traditional ones except for the GCN which doesn't consider the temporal dynamics in the traffic data.

Compared with the SVR model for the prediction horizons of 15 minutes, factorized ST-TGCN obtains 7\% and 4\% improvement in $R^2$ on the datasets SZ-taxi and Los-loop respectively. Such improvement can also be verified on RMSE, MAE, Accuracy, and $var$ metrics. Although SVR is capable of achieving satisfactory results among traditional models, it is not applicable to large datasets. Moreover, these traditional methods merely model temporal correlations among traffic data, while factorized ST-TGCN takes irregular spatial correlations and non-linear temporal correlations into consideration. Then we compare factorized ST-TGCN with the deep learning-based models. Compared with GCN, our model increases about 25\% in $R^2$ on the SZ-taxi and Los-loop. GCN does not show great performance for traffic prediction, which is caused by GCN absolutely ignores the temporal dependence. Compared with GRU that only takes temporal correlations into account, factorized ST-TGCN increases $R^2$ by about 8\% on the SZ-taxi and Los-loop for the prediction horizons of 15 minutes. These suggest that take one type of the dependence into account is far from sufficient. Finally, we compare factorized ST-TGCN with deep learning-based models that take spatial-temporal dependence into consideration. A3T-GCN utilizes GCN to model the spatial topology structured of traffic data and uses GRU to learns the short-time trend in time series. Moreover, it introduces attention mechanism to re-weight the influence of historical traffic states. Compared with A3T-GCN which obtains the best results so far, factorized ST-TGCN decreases the RMSE and MAE about 20\% and 25\% for the prediction horizons of 15 minutes on the SZ-taxi. Besides, our model increases Accuracy, $R^2$, and $var$ about 5\%, 6\%, and 6\% respectively. Such improvement can also be validated on the Los-loop.


We also explore the capability of long-term prediction for our proposed factorized ST-TGCN. We compare factorized ST-TGCN to other models when prediction horizons vary from 15 to 60 minutes. We conclude that factorized ST-TGCN is capable of maintaining relatively stable performance, especially for the SZ-taxi. Moreover, we visualize the results of traffic prediction in Fig.~\ref{fig:fig1}, Fig.~\ref{fig:fig2}~(for the coarse time granularity), and Fig.~\ref{fig:fig3}~(for the fine time granularity). It can be observed that factorized ST-TGCN shows different performance on the two datasets, we infer that this is caused by speed changes in the SZ-taxi is more dramatic than those in the Los-loop.



\subsection{Ablation study}
\textbf{Component analysis of factorized ST-TGCN:}~To confirm that both spatial dependence and temporal dependence are crucial to traffic speed prediction, we devise a spatial tensor graph convolutional neural network~(factorized S-TGCN) and a temporal tensor graph convolutional neural network~(factorized T-TGCN). Factorized S-TGCN merely regards nodes that are joint in the spatial domain as neighbor nodes, factorized T-TGCN considers nodes are just connected in the temporal pattern and our proposed factorized ST-TGCN integrates them. Table \ref{tab:table2} shows the performance of three models on the two datasets. Compared with factorized S-TGCN,  with approximately 4\% improvement in $R^2$ is attained by factorized ST-TGCN on the SZ-taxi. In addition to this, factorized ST-TGCN obtains about 1\% improvement compared with factorized T-TGCN. These demonstrate that spatial-temporal dependence is important. We consider that the better performance of factorized ST-TGCN not only attributes to comprehensively consider spatial-temporal dependence but also due to tensor decomposition operation. Therefore, we further explore the Tucker decomposition operation in the following parts. 
\begin{table}[!h]\footnotesize
	\renewcommand\arraystretch{0.8}
	\renewcommand\tabcolsep{3.0pt}
	\caption{Component analysis of factorized ST-TGCN}
	\label{tab:table2}
	\centering
	\begin{tabular}{cc*5{c}}
		\toprule
		Dataset&Model &{$RMSE$}&{$MAE$}&{$Accuracy$}&{${R^2}$}	&{$var$} \\\midrule
		\multirow{5}*{SZ-taxi}&{\tabincell{c}{factorized\\S-TGCN}}     &3.6849&2.4511	&0.7433		 &0.8754		&0.8754\\
		&{\tabincell{c}{factorized\\T-TGCN}}      &3.2894	&2.1537		 &0.7709		&0.9007&0.9007\\
		&{\tabincell{c}{factorized\\ST-TGCN}} &\textbf{3.1080}  &\textbf{2.0198}  &\textbf{0.7835}  &\textbf{0.9114}  &\textbf{0.9114} \\
		\midrule
		\multirow{5}*{Los-loop}&{\tabincell{c}{factorized\\S-TGCN}}     &4.2940	&2.5598		 &0.9268		&0.9045&0.9053\\
		&{\tabincell{c}{factorized\\T-TGCN}}     &4.5326	&2.7062		 &0.9227		&0.89367&0.8937\\
		&{\tabincell{c}{factorized\\ST-TGCN}}  &\textbf{3.5969}  &\textbf{2.2265}  &\textbf{0.9387}  &\textbf{0.9330}  &\textbf{0.9330}\\
		\bottomrule
	\end{tabular}
\end{table}

\textbf{The effectiveness of Tucker decomposition:}~To explore the effectiveness of Tucker decomposition, we conduct comparative experiments between ST-TGCN and factorized ST-TGCN. ST-TGCN directly conduct a multi-graph convolutional neural network in the spatial-temporal domains without tensor decomposition operation. As shown in Table \ref{tab:table3}, the factorized ST-TGCN is capable of showing better performance than the ST-TGCN model. For the prediction horizons of 15 minutes on the dataset SZ-taxi, factorized ST-TGCN obtains about 5\% improvement in $R^2$. Moreover, as shown in Table \ref{tab:table4}, factorized ST-TGCN is capable of achieving roughly equivalent improvement on the Los-loop dataset for the prediction horizons of 15 minutes. We also observe factorized ST-TGCN still outperforms ST-TGCN with prediction horizons from 15 to 60 minutes, so factorized ST-TGCN successfully maintains effectiveness and robustness for the long-term prediction. Such improvement demonstrates that Tucker decomposition as one of the higher-order generalizations of the matrix SVD and PCA is useful for compression and can remove irrelevant components to improve performance.
\begin{table}[!h]\footnotesize
	\renewcommand\tabcolsep{4.0pt}
	\caption{Tucker decomposition on the SZ-taxi}
	\label{tab:table3}
	\centering
	\begin{tabular}{c{c}*5{c}}
		\toprule  Model &Time&{$RMSE$}&$MAE$&$Accuracy$&${R^2}$ &$var$\\
		\midrule
		\multirow{4}*{ST-TGCN}
		&15min&3.8976&2.5799&0.7285&0.8606&0.8607\\
		&30min&3.9261&2.5927&0.7264&0.8586&0.8586\\
		&45min&3.9362&2.5966&0.7256&0.8579&0.8579\\
		&60min&3.9541&2.6128 &0.7244&0.8565&0.8565\\
		\midrule
		\multirow{4}*{\tabincell{c}{factorized\\ST-TGCN}}
		&15min      &\textbf{3.1080}  &\textbf{2.0198}  &\textbf{0.7835}  &\textbf{0.9114}  &\textbf{0.9114}  \\
		&30min      &\textbf{3.5181}  &\textbf{2.2951}  &\textbf{0.7548}  &\textbf{0.8865}  &\textbf{0.8865}\\  
		&45min      &\textbf{3.6039}  &\textbf{2.3689}	&\textbf{0.7488}  &\textbf{0.8809} 	&\textbf{0.8809} \\ 
		&60min      &\textbf{3.7358}  &\textbf{2.4476}  &\textbf{0.7396}  &\textbf{0.8720} 	&\textbf{0.8720}  \\
		\bottomrule
	\end{tabular}
\end{table}
	\vspace{-5pt}
\begin{table}[!h]\footnotesize
	\renewcommand\tabcolsep{4.0pt}
	\caption{Tucker decomposition on the Los-loop}
	\label{tab:table4}
	\centering
	\begin{tabular}{{c}{c}*5{c}}
		\toprule Model &Time&{$RMSE$}&$MAE$&$Accuracy$&${R^2}$ &$var$\\
		\midrule
		\multirow{5}*{ST-TGCN}
		&15min 		&4.5068&2.6846&0.9232&0.8948&0.8948\\
		&30min		&5.2875&3.0651&0.9098&0.8560&0.8561\\
		&45min 		&5.8455&3.4912&0.9002&0.8243&0.8247\\
		&60min		&6.2267&3.6849&0.8937&0.8007&0.8008\\
		\midrule
		\multirow{4}*{\tabincell{c}{factorized\\ST-TGCN}}
		&15min   &\textbf{3.5969}  &\textbf{2.2265}  &\textbf{0.9387}  &\textbf{0.9330}  &\textbf{0.9330}\\
		&30min   &\textbf{4.9283}  &\textbf{2.8690}  &\textbf{0.9159}  &\textbf{0.8749}  &\textbf{0.8749}\\
		&45min   &\textbf{5.5733}  &\textbf{3.3600}  &\textbf{0.9049}  &\textbf{0.8402}  &\textbf{0.8409}\\
		&60min   &\textbf{5.8225}  &\textbf{3.4772}  &\textbf{0.9006}  &\textbf{0.8258}  &\textbf{0.8263}\\
		\bottomrule
	\end{tabular}
	\vspace{-5pt}
\end{table}

\textbf{Perturbation analysis:}~The traffic data collected by sensors is inevitably noisy, the ability to resist noise perturbation reflects the effectiveness and robustness of the model. Here, we conduct a perturbation analysis to validate these properties of our proposed model. We perform perturbations to traffic speed data by adding random Gaussian noise~\cite{rasmussen2003gaussian}. The probability density function (PDF) of Gaussian distribution $N\in(0,\sigma^2)$ is formulated as
\begin{align}
p(x)=\frac{1}{\sqrt{2\pi}\sigma}exp(-\frac{(x-\mu)^2}{2\sigma^2}),
\end{align}
where $\mu$ is the mean value and $\sigma$ is the standard deviation. We compare factorized ST-TGCN with ST-TGCN to validate the ability to resist noise perturbation. We set $\sigma={\{0,0.2,0.4,1,2,4\}}$, where $\sigma=0$ means there is no Gaussian noise added to traffic data. 
\begin{figure}[h]
	\vspace{-0.5cm}
	\centering
	\begin{minipage}[t]{0.47\textwidth}
		\centering
		\includegraphics[width=8.5cm]{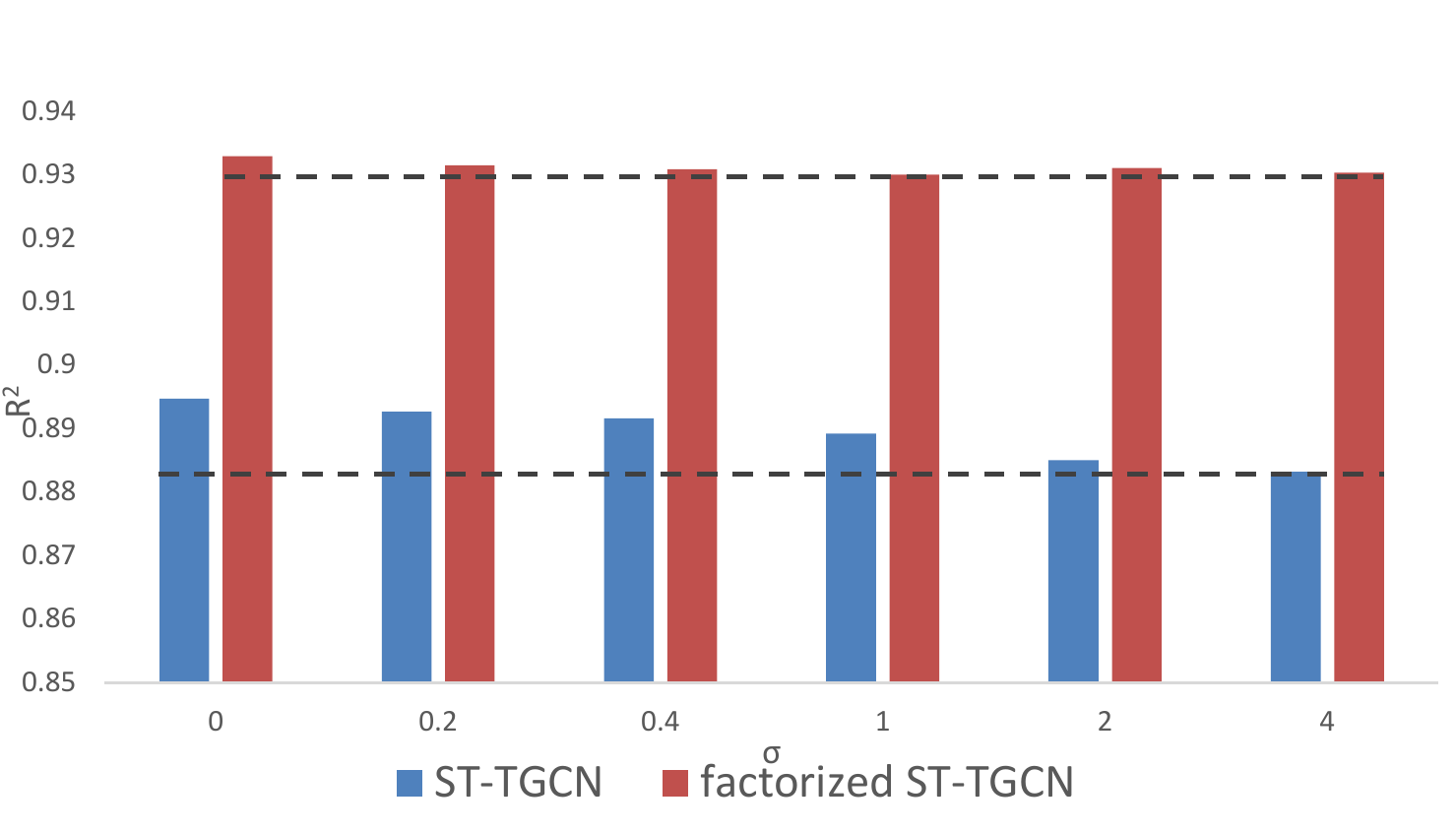}
		\centering
	\end{minipage}
	\caption{Perturbation analysis on the Los-loop for prediction horizons of 15 minutes}
	\label{fig:fig4}
	\vspace{-10pt}
\end{figure}
As shown in Fig.~\ref{fig:fig4}, on the dataset Los-loop, ST-TGCN falls much more than factorized ST-TGCN in the metric of $R^2$. Therefore, we conclude that i) ST-TGCN introduces the idea of multi-graph to deal with traffic prediction, so ST-TGCN is capable of resisting perturbation to some extent. However, ST-TGCN fails to maintain robustness when $\sigma$ increases. ii) Our factorized ST-TGCN outperforms ST-TGCN and shows great robustness with perturbation growing.

\textbf{The components number of factorized ST-TGCN:}~In the Tucker decomposition, choosing how many components to use for describing the original tensor is related to noise compression and computational complexity. Based on three criteria proposed by~\citet{kiers2001three}, for our tensor data~$\mcX\in\mbR^{N\times D\times T}$, we have i) $N$ denotes the number of nodes, the number of $N$ tends to be larger, which is Tucker decomposition focus on. ii) $T$ and $D$ are constant value that is not affected by the scale of datasets and  $T$ tends to be small. As Table \ref{tab:table5} shows, we compare the performance of factorized ST-TGCN with a varying number of components in mode $N$, where core tensor $\mcC\in {\mbR^{{n}\times{d}\times{t}}}$. Considering $d$ and $t$ can be omitted, we vary them with $n$. When $n$ equals $N^{\frac{1}{2}}$, $t$ equals $T^{\frac{1}{2}}$ and $d$ equals $D^{\frac{1}{2}}$. Specifically, we have component matrices  $\X_S\in\mbR^{N\times {n} }, \X_F\in\mbR^{D\times {d}}$, and $\X_T\in\mbR^{T\times {t}}$. In Table \ref{tab:table5}, factorized ST-TGCN shows best performance when $n$ is equals to $N^{\frac{1}{2}}$. Moreover, we compare computational time and memory space in Table \ref{tab:table6}, results suggest that an appropriate number of components is capable of improving performance as well as reducing computational time and memory space. 

\begin{table}[!h]
	\renewcommand\tabcolsep{4.0pt}
	\caption{Comparisons on the number of components}
	\label{tab:table5}
	\centering
	\begin{tabular}{cc*5{c}}
		\toprule
		Dataset&$n$ & {$RMSE$}&  {$MAE$} &{$Accuracy$}&{$R^2$} &{$var$}\\
		\midrule
		\multirow{3}*{SZ-taxi}
		&$n=N$	                &3.8980  			&2.5828    			&0.7285				&0.8606				&0.8606\\
		& $n=N^{\frac{1}{2}}$	&\textbf{3.1080}  	&\textbf{2.0198}  	&\textbf{0.7835}  	&\textbf{0.9114}  	&\textbf{0.9114} \\
		& $n=N^{\frac{1}{3}}$	&3.6997  			&2.4670  			&0.7423				&0.8744 			&0.8746 \\
		\midrule
		\multirow{3}*{Los-loop}
		& $n=N$	            	&4.9783  			&2.9430    			&0.9151				&0.8716				&0.8758\\
		& $n=N^{\frac{1}{2}}$	 &\textbf{3.5969}  &\textbf{2.2265}  &\textbf{0.9387}  &\textbf{0.9330}  &\textbf{0.9330}\\
		& $n=N^{\frac{1}{3}}$	&3.7119  	&2.2918  	&0.9367	&0.9286	&0.9287 \\
		\bottomrule
	\end{tabular}
		\vspace{-5pt}
\end{table}
\begin{table}[!h]
	\renewcommand\tabcolsep{2.5pt}
	\caption{Comparisons on the numbers of components}
	\label{tab:table6}
	\centering
	\begin{tabular}{cc*5{c}}
		\toprule
		Dataset&$n$ &{\tabincell{c}{Computational\\Time(s)}} &{\tabincell{c}{Memory\\Space}} \\
		\midrule
		\multirow{3}*{SZ-taxi}
		&	$n=N$				&31.49			&$NDT$\\
		&$n=N^{\frac{1}{2}}$ 	&0.35			&$N^\frac{3}{2}+T^\frac{3}{2}+D^\frac{3}{2}+(NTD)^\frac{1}{2}$\\
		&$n=N^{\frac{1}{3}}$	&\textbf{0.25}	&\bm{$N^\frac{4}{3}+T^\frac{4}{3}+D^\frac{4}{3}+(NTD)^\frac{1}{3}$}\\
		\midrule
		\multirow{3}*{Los-loop}
		&$n=N$					&43.80			&$NDT$\\
		&$n=N^{\frac{1}{2}}$	&0.65			&$N^\frac{3}{2}+T^\frac{3}{2}+D^\frac{3}{2}+(NTD)^\frac{1}{2}$\\
		&$n=N^{\frac{1}{3}}$	&\textbf{0.54}	&\bm{$N^\frac{4}{3}+T^\frac{4}{3}+D^\frac{4}{3}+(NTD)^\frac{1}{3}$}\\
		\bottomrule
	\end{tabular}
\end{table}
\section{Conclusion}
In this paper, we have proposed a factorized spatial-temporal tensor convolutional neural networks~(factorized ST-TGCN) for traffic prediction and introduced tensor decomposition for efficient tensor graph convolution. Experimental results on two real-world traffic datasets have shown factorized ST-TGCN outperforms other state-of-the-art methods.
Besides, factorized ST-TGCN shows its superiorities in computational complexity, memory requirement, parallel computational efficiency, and noise suppression. The benefits are more obvious as the dataset size increases. In the future, we plan to apply factorized ST-TGCN in other related tasks with spatial and temporal characteristics.
\ifCLASSOPTIONcaptionsoff
  \newpage
\fi

\footnotesize{
\bibliographystyle{plainnat}
\bibliography{IEEEabrv,mybib}{}
\bibliographystyle{IEEEtran}
}
\begin{IEEEbiography}[{\includegraphics[width=1in,height=1.25in,clip,keepaspectratio]{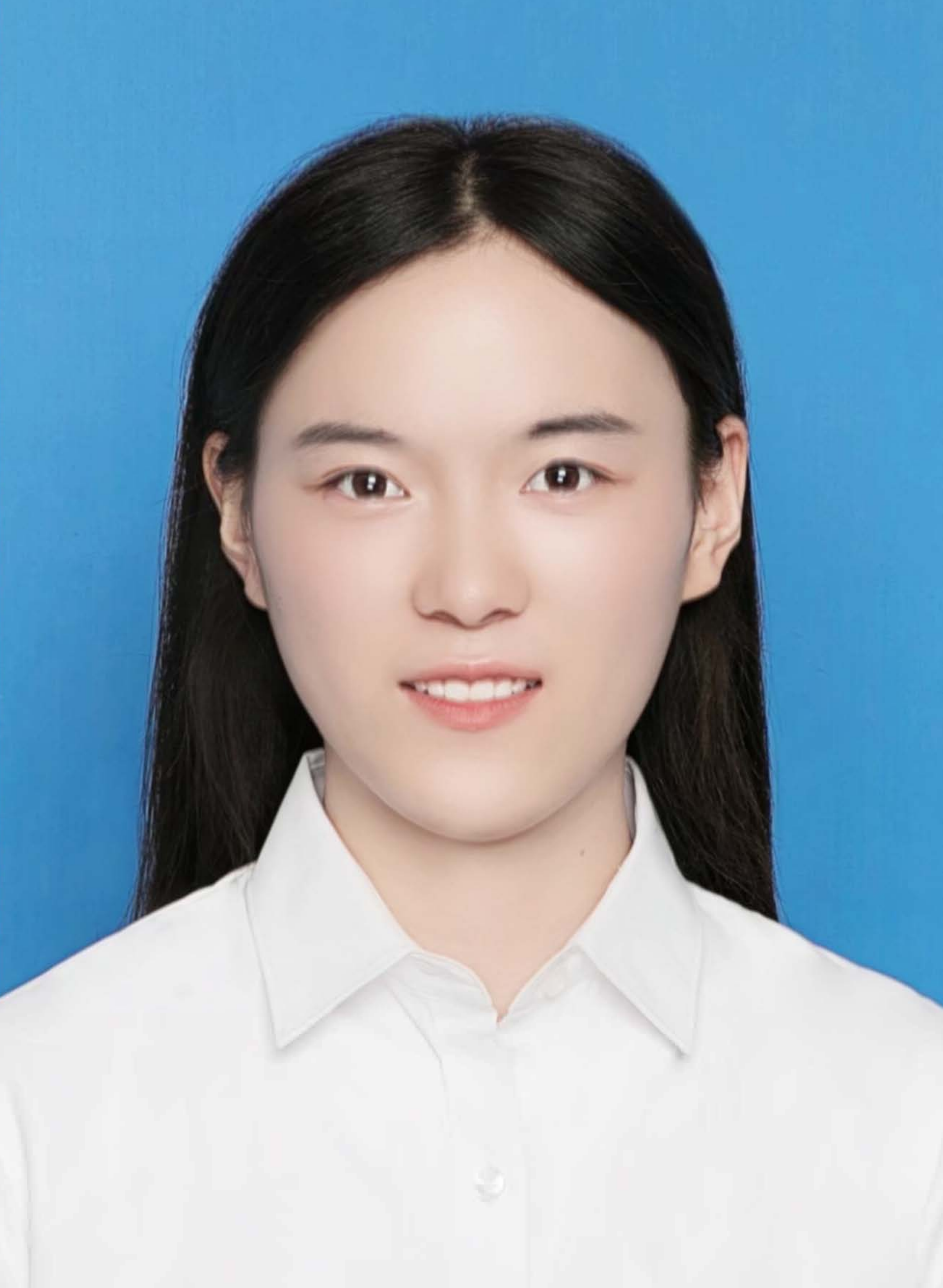}}]{Xuran Xu} received the B.S. degree in Software Engineering from 
	Jiangsu University of Science and Technology, Zhenjiang, China, in 2019. She is currently working toward the M.S. degree at the School of Computer Science and Engineering, Nanjing University of Technology and Science, Nanjing, 210094, China. Her current research interests include graphical model, tensor graph convolutional network, intelligent transportation systems, and deep learning.
\end{IEEEbiography}
\vspace{-1.2cm}
\begin{IEEEbiography}[{\includegraphics[width=1in,height=1.25in,clip,keepaspectratio]{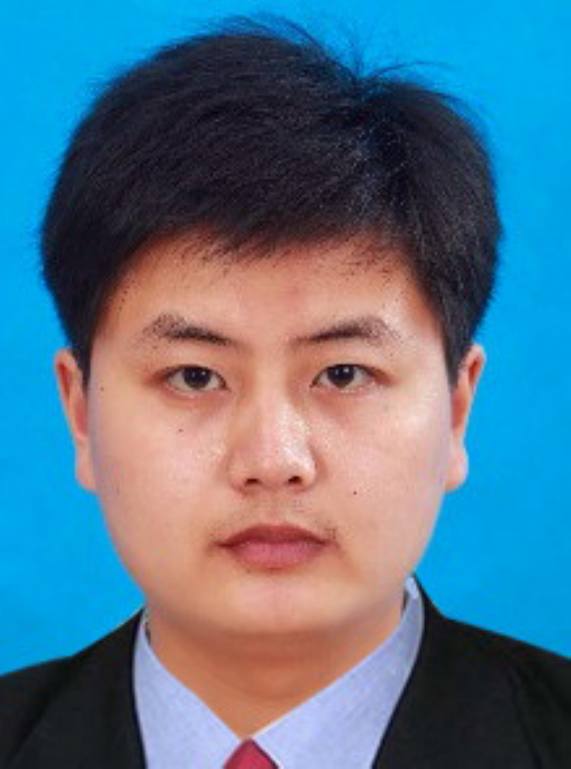}}]{Tong Zhang} received the B.S. degree in information science and technology from Southeast University, Nanjing, China, in 2011, the M.S. degree from the Research Center for Learning Science, Southeast University, in 2014, and the Ph.D. degree in information and communication engineering at Southeast University in 2018. Now he is a lecture in the School of Computer Science and Engineering from Nanjing University of Science and Technology, Nanjing, China.
His interests include pattern recognition, affective computing, and computer vision.
\end{IEEEbiography}
\vspace{-1.2cm}
\begin{IEEEbiography}[{\includegraphics[width=1in,height=1.25in,clip,keepaspectratio]{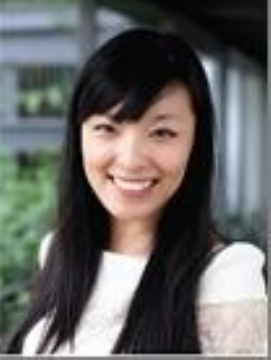}}]{Chunyan Xu} received the Ph.D. degree from the School of Computer Science and Technology, Huazhong University of Science and Technology, 2015. From 2013 to 2015, she was a visiting scholar in the Department of Electrical and Computer Engineering at National University of Singapore. Now she works in the School of Computer Science and Engineering from Nanjing University of Science and Technology, Nanjing, 210094, China. Her research interests include computer vision, manifold learning and deep learning.
\end{IEEEbiography}
\vspace{-1.2cm}
\begin{IEEEbiography}[{\includegraphics[width=1in,height=1.25in,clip,keepaspectratio]{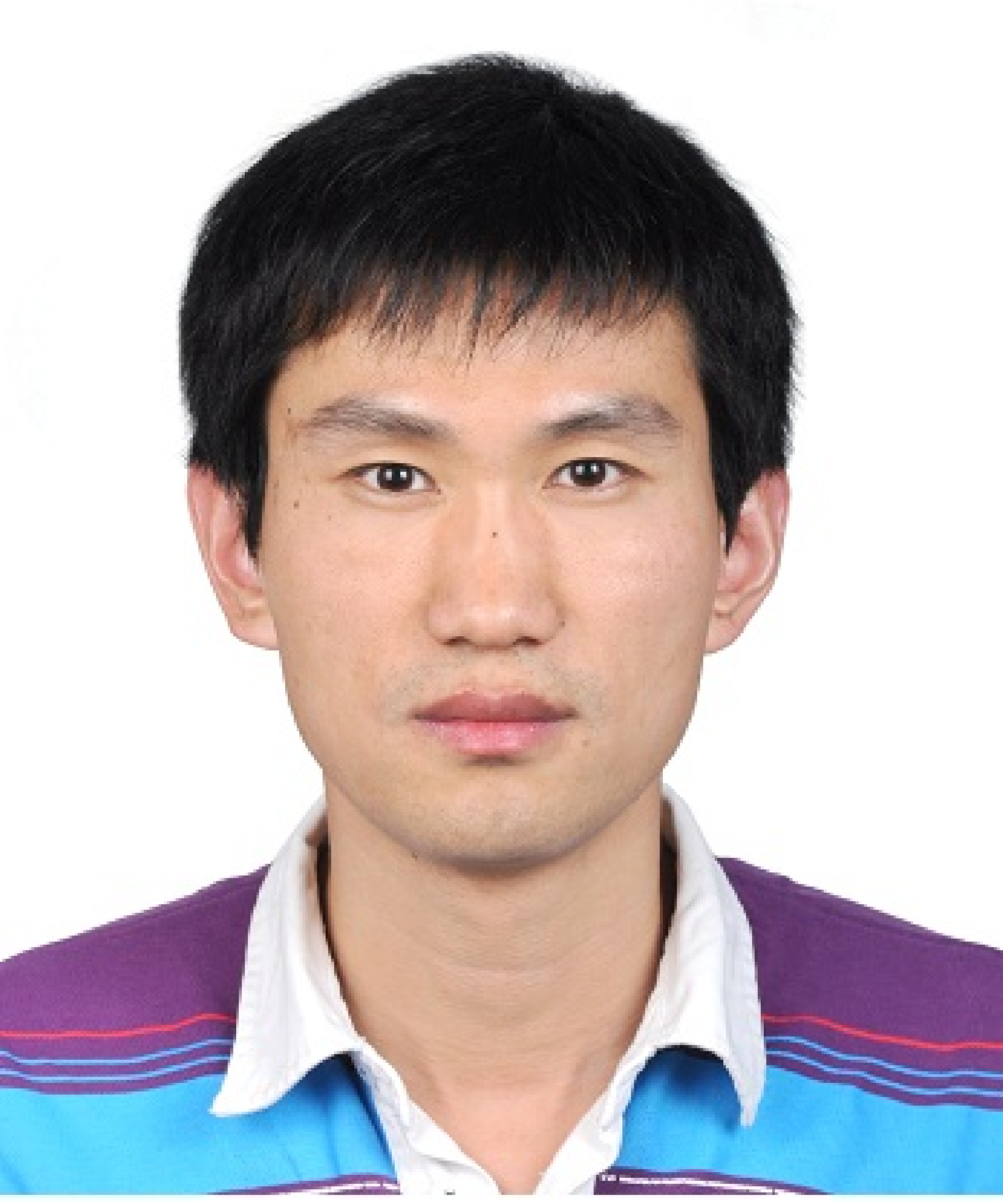}}]{Zhen Cui} received the B.S., M.S., and Ph.D. degrees from Shandong Normal University, Sun Yat-sen University, and Institute of Computing Technology (ICT), Chinese Academy of Sciences in 2004, 2006, and 2014, respectively. He was a Research Fellow in the Department of Electrical and Computer Engineering at National University of Singapore (NUS) from 2014 to 2015. He also spent half a year as a Research Assistant on Nanyang Technological University (NTU) from Jun. 2012 to Dec. 2012. Currently, he is a Professor of Nanjing University of Science and Technology, China. His research interests mainly include deep learning, computer vision and pattern recognition.
\end{IEEEbiography}
\vspace{-1.2cm}
\begin{IEEEbiography}[{\includegraphics[width=1in,height=1.25in,clip,keepaspectratio]{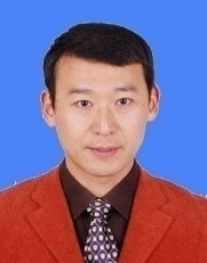}}]{Jian Yang} received the Ph.D. degree from Nanjing University of Science and Technology (NUST), on the subject of pattern recognition and intelligence systems in 2002. In 2003, he was a postdoctoral researcher at the University of Zaragoza. From 2004 to 2006, he was a Postdoctoral Fellow at Biometrics Centre of Hong Kong Polytechnic University. From 2006 to 2007, he was a Postdoctoral Fellow at Department of Computer Science of New Jersey Institute of Technology. Now, he is a Chang-Jiang professor in the School of Computer Science and Engineering of NUST. He is the author of more than 100 scientific papers in pattern recognition and computer vision. His journal papers have been cited more than 4000 times in the ISI Web of Science, and 9000 times in the Web of Scholar Google. His research interests include pattern recognition, computer vision and machine learning. Currently, he is/was an associate editor of Pattern Recognition Letters, IEEE Trans. Neural Networks and Learning Systems, and Neurocomputing. He is a Fellow of IAPR.
\end{IEEEbiography}

\end{document}